\theoremstyle{plain}
\newtheorem{theorem}{Theorem}[section]
\newtheorem{proposition}[theorem]{Proposition}
\theoremstyle{definition}
\theoremstyle{remark}
\newtheorem{remark}[theorem]{Remark}
\providecommand{\Paren}[1]{\ensuremath{\left( #1 \right)}}
\providecommand{\Braces}[1]{\ensuremath{\left\{ #1 \right\}}}
\providecommand{\norm}[1]{\lVert#1\rVert}
\providecommand{\Norm}[1]{\ensuremath{\left\lVert#1\right\rVert}}
\providecommand{\defeq}[0]{\stackrel{\text{def}}{=}}
\def\tr{^\top}
\providecommand{\EE}[2]{\ensuremath{\mathbb{E}_{#1}\left[ #2 \right]}}
\providecommand{\partfrac}[2]{\ensuremath{\tfrac{\partial #1}{\partial #2}}}
\providecommand{\cpm}{\!\pm\!}
\begin{document}

\title{Strengthening the Internal Adversarial Robustness in Lifted Neural Networks}

\author{%
  Christopher Zach  \\
  Chalmers University of Technology\\
  Gothenburg, Sweden \\
  \texttt{zach@chalmers.se}
}

\maketitle

\begin{abstract}
  Lifted neural networks (i.e.\ neural architectures explicitly optimizing
  over respective network potentials to determine the neural activities) can
  be combined with a type of adversarial training to gain robustness for
  internal as well as input layers, in addition to improved generalization
  performance. In this work we first investigate how adversarial robustness in
  this framework can be further strengthened by solely modifying the training
  loss. In a second step we fix some remaining limitations and arrive at a
  novel training loss for lifted neural networks, that combines targeted and
  untargeted adversarial perturbations.

\end{abstract}

\section{Introduction}

Lifted neural networks
(e.g.~\cite{carreira2014distributed,zhang2017convergent,li2019}) and the
related inference learning methods~\cite{song2020can} cast the inference of
neural activities for a given data sample as explicit optimization task,
instead of using prescribed rules like the standard forward pass in layered
neural networks. Lifted neural networks are thought to be more plausible to
drive synaptic plasticity and learning in the
brain~\cite{friston2005,salvatori2023brain,song2024inferring}, but these
methods are also relevant for certain types of analog, energy-based hardware
used in neuromorphic computing~\cite{kendall2020,scellier2023energybased}. The
usual setup for learning in lifted neural networks leads to a joint
minimization task w.r.t.\ network weights and neural activities. The ability
to trained such networks by e.g.\ joint minimization over weights and neural
states, or by respective alternating optimization, has been one motivating
aspect for lifted neural
networks~\cite{carreira2014distributed,zhang2017convergent}). Recently, a less
natural min-max formulation for lifted networks has been shown to be
beneficial in theory and practice due to its connection with adversarial
training. Unlike standard adversarial training, the min-max formulation for
lifted neural networks incorporates adversarial perturbations at all layers in
the network, not just input layer perturbations.

In this work we show how to potentially strengthen adversarial training for
lifted networks and how to avoid certain shortcomings. A first contribution
relaxes the min-max formulation to render the training objective more
difficult (Section~\ref{sec:generalizing_AROVR}). A second contribution
incorporates targeted attacks into the framework, which addresses some still
existent shortcomings (Section~\ref{sec:targeted}). We intentionally restrict
the setting to small training sets, such that trained networks easily operate
in the interpolation regime as our main interest at this point is to leverage
the training loss to improve a network's generalization ability.

\section{Background}
\label{sec:background}

Since lifted neural networks and their realizations are not widely known, we
provide in this a short introduction. In this work we focus on layered
feed-forward networks, where the neural activities for a given input sample
$x$ satisfy $z_0=x$ and $z_k=f_k(W_{k-1}z_{k-1})$ for $k=1,\dotsc,L$. Thus,
the network has $L$ layers and each layer has an activation function
$f_k$. For notational brevity we omit the biases. All theoretical results
should generalize to arbitrary feed-forward networks (but with a signficantly
more cumbersome notation).

\subsection{Network Potentials}

The general idea behind lifted neural networks is convert the inference step
to determine the neural activities explicitly into an optimization task. The
neural activities $z^*$ for a given input $x$ are the solution to minimizing a
network potential $U_\theta(z;x)$ for the network parameters $\theta$, i.e.\
$z^*=\arg\min_z U_\theta(z;x)$. This network potential can be based on
energy-based models and be influenced by neurosciene (where Hopfield
networks~\cite{amari1972learning,hopfield1982neural} and Boltzmann machines~\cite{ackley1985learning}
are the most prominent examples), but for layered feed-forward neural networks
the potential $U_\theta$ is typically penalizing deviations from the desired
forward dynamics between two layers, $z_k=f_k(W_{k-1}z_{k-1})$. The easiest
penalizer is the quadratic one, leading to a network potential of the form
\begin{align}
  U_\theta(z;x) = \tfrac{1}{2} \sum\nolimits_k \norm{z_k - f_k(W_{k-1}z_{k-1})}^2,
\end{align}
with $z_0=x$. The least-squares terms can be optionally reweighted. Predictive
coding networks~\cite{whittington2017approximation,salvatori2023brain}, and
the method of auxiliary
coordinates~\cite{carreira2014distributed,choromanska2019beyond} are based on
such quadratic penalizers. A different type of network potential is based on
the Fenchel-Young inequality from convex analysis, which also has been used as
a general loss function in machine learning~\cite{blondel2020learning}. Let
$G_k$ be a convex function and $G_k^*$ its convex conjugate, then we introduce
the ``Fenchel-Young divergence''
\begin{align}
  D_k(z_k \| a_k) := G_k(z_k) - z_k\tr a_k + G_k^*(a_k),
\end{align}
which is always non-negative due to the Fenchel-Young inequality. With
$f_k = \partial G_k^*$ (and therefore $f_k^{-1}=\partial G_k$) it is known that
$D_k(z_k \| a_k) = 0$ iff $z_k = f_k(a_k)$.
Therefore a suitable network potential is given by
\begin{align}
  U_\theta(z;x) \!=\! \sum\nolimits_k \!\!D_k(z_k\|W_{k-1}z_{k-1})
  \!=\! \sum\nolimits_k \!\!\Paren{ G_k(z_k) \!-\! z_k\tr W_{k-1} z_{k-1} \!+\! G_k^*(W_{k-1} z_{k-1}) }\!,
  \label{eq:LPOM}
\end{align}
which forms the basis of ``lifted proximal operator
machines''~\cite{li2019,li2020}. The most prominent choice for $G_k$ is given
by $G_k(z_k)=\norm{z_k}^2/2 + \imath_{C_k}(z_k)$, where $C_k$ is a convex set
and $\imath_{C_k}(z_k)=0$ iff $z_k\in C_k$ and $+\infty$ otherwise. The
corresponding activation function $f_k$ is given by
$f_k(a_k) = \Pi_{C_k}(a_k)$, i.e.\ the (Euclidean) projection of $a_k$ into
the convex set $C_k$. The most common choices for $C_k$ are
$C_k=\mathbb{R}$ and $C_k=\mathbb{R}_{\ge 0}$, which yield linear and ReLU
activation functions, respectively.

All these network potentials can be understood as relaxations of deeply nested
optimization tasks~\cite{zach2021}. Observe that $U_\theta(z;x)\ge 0$ and zero
iff $z_k=f_k(W_{k-1}z_{k-1})$ follows the intended forward
dynamics. \eqref{eq:LPOM} can be generalized to arbitrary feed-forward
computational graphs (such as residual connections) by using the following
network potential,
\begin{align}
  U_\theta(z;x) = G(z) - z\tr (Wz + W_0x) + G^*(Wz + W_0x),
\end{align}
where $W$ is a strictly lower-triangular matrix (i.e.\ with zeros on the
diagonal) and $W_0$ is a matrix injecting the input $x$ into the network.

\subsection{Training of Lifted Neural Networks}

For a target loss $\ell$, the underlying task of learning in lifted neural
networks is given by
\begin{align}
  \min\nolimits_\theta \ell(z^*)
  \qquad \text{s.t. } z^* = \arg\min\nolimits_z U_\theta(z;x) \iff U_\theta(z^*;x) \le \underbrace{\min\nolimits_z U_\theta(z;x)}_{= 0}.
  \label{eq:learning}
\end{align}
The true label (or regression target) is absorbed into $\ell$, and for
notational brevity we consider single-sample training sets (or think of
$x$ and $z$ as the entire training input and neural activities,
respectively). Implicit differentiation of~\eqref{eq:learning} yields error
back-propagation, but we are interested in a different route. By fixing an
(inverse) Lagrange multiplier $1/\beta>0$ for the inequality constraint
in~\eqref{eq:learning} we obtain a \emph{relaxed optimal value reformulation}
(ROVR,~\cite{zach2021}),
\begin{align}
  \min\nolimits_\theta \min\nolimits_z \ell(z) + \tfrac{1}{\beta} U_\theta(z;x). \tag{ROVR${}_\beta$}
  \label{eq:ROVR}
\end{align}
Many training methods for lifted neural networks (and certain energy-based
models~\cite{scellier2017}) are implicitly based on~\eqref{eq:ROVR}, and
equivalence with back-propagation (under mild conditions) is readily derived
when $\beta\to 0^+$. \eqref{eq:ROVR} is a sensible training objective in its
own right, since it aims to lower both the target loss $\ell$ and deviations
from the neural dynamics $U_\theta$. Interestingly, the following
\emph{adversarial ROVR} (AROVR,~\cite{hoier2024two}) based on adversarial robustness,
\begin{align}
  \min\nolimits_\theta \max\nolimits_z \ell(z) - \tfrac{1}{\beta} U_\theta(z;x). \tag{AROVR${}_\beta$},
  \label{eq:AROVR}
\end{align}
turns out to be a viable training objective as well (with small generalization
benefits compared to ROVR~\cite{laborieux2021scaling,hoier2024two}). The AROVR
will be discussed in more detail in Section~\ref{sec:generalizing_AROVR}. In
practice, training of lifted networks is performed by first (approximately)
inferring the neural activities, i.e.\ solving the objective w.r.t.\
$z$, followed by a (stochastic) gradient-based update of $\theta$.

If $\min_z U_\theta(z;x)$ is not just a constant for all $\theta$, then the
ROVR and AROVR correspond to contrastive learning tasks,
\begin{align}
  \begin{split}
    &\min\nolimits_\theta \Braces{ \min\nolimits_z \Braces{ \ell(z) + \tfrac{1}{\beta} U_\theta(z;x) } - \tfrac{1}{\beta} \min\nolimits_z U_\theta(z;x) } \\
    &\min\nolimits_\theta \Braces{ \tfrac{1}{\beta} \min\nolimits_z U_\theta(z;x) - \min\nolimits_z \Braces{ -\ell(z) + \tfrac{1}{\beta} U_\theta(z;x) } },
  \end{split}
\end{align}
which have realizations in the literature as contrastive Hebbian
learning~\cite{movellan1991,xie2003,zach2019} and equilibrium
propagation~\cite{scellier2017,laborieux2021scaling}.

\section{Understanding and Generalizing the AROVR}
\label{sec:generalizing_AROVR}

In this section we propose a generalization of the AROVR and at the same time
make a clearer connection with adverserial robustness. Let
$U_\theta(z;x)$ be a network potential such that
$z^*(x) = \arg\min_z U_\theta(z;x)$ yields the desired network dynamics
$z^*(x)=f_\theta(x)$. We also assume that $U_\theta(\cdot;x)$ is coercive for
all $\theta$ and $x$, i.e.\
$\lim_{z:\norm{z}\to\infty} U_\theta(z;x) \to \infty$. This implies that the
sublevel set $S_\delta(x,\theta) := \{ z: U_\theta(z;x) \le \delta\}$ (for a
$\delta>0$) is bounded. Now consider the following adversarial training
problem,
\begin{align}
  \label{eq:simple_AT}
  \min\nolimits_\theta \max\nolimits_z \ell(z) \quad \text{s.t. } z \in S_\delta(x,\theta)
  & &\equiv & & \min\nolimits_\theta \max\nolimits_z \ell(z) \quad \text{s.t. } U_\theta(z;x) \le \delta.
\end{align}
$\delta>0$ plays a similar role as the radius of allowed perturbations in
norm-based adversarial attacks. If $U_\theta(z;x)$ is 1-strongly convex in
$z$, then $S_\delta(x,\theta)$ is contained a the
$\sqrt{2\delta}$-norm ball centered at $z^*(x)$. Using a fixed multiplier
$\beta^{-1}$ for a $\beta>0$ to convert the constraint into a penalizer yields
\begin{align}
  \label{eq:relaxed_simple_AT}
  \min\nolimits_\theta \max\nolimits_z \ell(z) - \tfrac{1}{\beta} U_\theta(z;x),
\end{align}
which is exactly the AROVR. We now generalize~\eqref{eq:simple_AT} to
incorporate a distance-like mapping $d_\gamma(z,z^*(x))$ instead of the
difference of potentials,
$U_\theta(z;x) - U_\theta(z^*(x))=U_\theta(z;x)$. We assume that
$d_\gamma(z,z)=0$ and $d_\gamma(z,z')$ is coercive for all $z$, $z'$ and
$\gamma$. A generalization of~\eqref{eq:simple_AT} is now given by
\begin{align}
  \label{eq:generalized_AT}
  \min\nolimits_\theta \max\nolimits_{z^-} \ell(z^-) \qquad \text{s.t. } d_\gamma(z^-,z^*) \le \delta \qquad z^* = \arg\min\nolimits_z U_\theta(z;x) .
\end{align}
The adversarial region is now shaped by $d_\gamma$, which can be completely
independent of the network potential $U_\theta$. The Lagrangian relaxation
of~\eqref{eq:generalized_AT} using fixed multipliers $\lambda/\beta>0$ and
$1/\beta>0$ yields
\begin{align}
  \label{eq:relaxed_gen_AT}
  \min\nolimits_\theta \max\nolimits_{z^-,z^*} \ell(z^-) - \tfrac{\lambda}{\beta} d_\gamma(z^-,z^*) - \tfrac{1}{\beta} U_\theta(z^*;x) .
\end{align}
We consider $\lambda$ as part of the parameter vector $\gamma$ in the following.

At this point we should emphasize two major differences
of~\eqref{eq:generalized_AT} and a standard adversarial training problem:
\begin{enumerate}
\item In~\eqref{eq:generalized_AT}, adversarial perturbations can be applied
  everywhere in the network---not just to the network's input units.
\item The adversarial region defined by
  $\{z: d_\gamma(z,z^*)\le \delta\}$ can be highly anisotropic, and therefore
  different units in the network can have different magnitudes of adversarial
  perturbations.
\end{enumerate}
Consequently, a worthwhile research question is how to choose $\gamma$ in
order to minimize the generalization error of an adversarially trained network.

\begin{proposition}
  \label{prop:gen_AROVR}
  Adversarial training based on~\eqref{eq:relaxed_gen_AT} is harder than
  training via AROVR~\eqref{eq:relaxed_simple_AT} in the following sense:
  \begin{align}
    \label{eq:prop1}
    \max_{z^-} \ell(z^-) - \tfrac{1}{\beta} U_\theta(z^-;x)
    \le \max_{z^-,z^*} \ell(z^-) - \tfrac{\lambda}{\beta} d_\gamma(z^-,z^*) - \tfrac{1}{\beta} U_\theta(z^*;x) .
  \end{align}
\end{proposition}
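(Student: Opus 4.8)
The plan is to prove~\eqref{eq:prop1} by exhibiting a single feasible point for the richer maximization on the right-hand side whose objective value already matches the left-hand side. The crucial structural observation is that the right-hand side maximizes jointly over the two \emph{decoupled} variables $z^-$ and $z^*$, whereas the left-hand side effectively ties them together. Since enlarging the domain of a maximization can only raise its value, I would restrict the joint maximum on the right to the diagonal $\{(z^-,z^*): z^*=z^-\}$ and thereby obtain a lower bound that collapses to the AROVR objective.

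Concretely, I would first fix an arbitrary $z^-$ and bound the inner maximization over $z^*$ from below by its value at the particular choice $z^*=z^-$, giving
\[
  \max\nolimits_{z^*} \Paren{ \ell(z^-) - \tfrac{\lambda}{\beta} d_\gamma(z^-, z^*) - \tfrac{1}{\beta} U_\theta(z^*; x) } \ge \ell(z^-) - \tfrac{\lambda}{\beta} d_\gamma(z^-, z^-) - \tfrac{1}{\beta} U_\theta(z^-; x).
\]
The next step invokes the standing assumption $d_\gamma(z,z)=0$, which makes the distance-like penalty vanish on the diagonal, so the right-hand side above reduces exactly to $\ell(z^-) - \tfrac{1}{\beta} U_\theta(z^-;x)$, i.e.\ the AROVR integrand. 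Taking the maximum over $z^-$ on both sides then yields precisely~\eqref{eq:prop1}, since the outer maximum of the richer objective is $\max_{z^-}\max_{z^*}(\cdots)$.

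There is essentially no hard part here: the entire argument is a one-line ``restrict to the diagonal'' comparison, and it relies \emph{only} on the defining property $d_\gamma(z,z)=0$, not on coercivity of $d_\gamma$, the sign of $\lambda$, nor any convexity of $U_\theta$. The single point deserving care is notational rather than mathematical: one must recall that in the relaxed objective~\eqref{eq:relaxed_gen_AT} the symbol $z^*$ denotes a \emph{free} maximization variable (obtained by Lagrangian relaxation of the $\arg\min$ constraint), not the literal minimizer $z^*(x)$, so substituting $z^*=z^-$ is a legitimate feasible choice. I would close by observing that the inequality is typically strict: whenever some off-diagonal $z^*$ lowers $U_\theta$ by more than the induced $d_\gamma$-penalty costs, the adversary gains from decoupling $z^*$ from $z^-$, which is the concrete sense in which the generalized objective is ``harder.''
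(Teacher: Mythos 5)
Your proposal is correct and is essentially the same argument as the paper's own proof: the left-hand side of~\eqref{eq:prop1} is the right-hand side restricted to the diagonal $z^*=z^-$, and the assumption $d_\gamma(z^-,z^-)=0$ makes the two objectives coincide there. Your additional remarks (that only $d_\gamma(z,z)=0$ is needed, and that $z^*$ is a free variable after relaxation) are accurate but do not change the route.
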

\begin{proof}
  The left-hand side of~\eqref{eq:prop1} is exactly the r.h.s.\ with the
  additional constraint $z^-=z^*$, and the inequality is obtained by noting
  that $d_\gamma(z^-,z^-)=0$ by assumption. \qed
\end{proof}
The parameters $\gamma$ are ideally obtained via meta-learning, but can one
also optimize jointly over $\theta$ and $\gamma$? Consider
\begin{align}
  \min\nolimits_{\theta,\gamma} \max\nolimits_{z^-,z^*} \ell(z^-) - \tfrac{1}{\beta} d_\gamma(z^-,z^*) - \tfrac{1}{\beta} U_\theta(z^*;x) \nonumber.
\end{align}
Since this is upper-bounding the regular AROVR,
$\min_\theta \max_{z^-} \ell(z^-) - \tfrac{1}{\beta} U_\theta(z^-;\theta)$,
$d_\gamma$ tends to approach $\imath\{z=z^*\}$. If we maximize over
$\gamma$, then $d_\gamma(z,z^*)$ approaches zero, and therefore $z^-$ and
$z^*$ will be independent. Consequently, some form of hyper-parameter search
or dedicated meta-learning is required to obtain the optimal choice for $\gamma$.

\subsection{Choices for $d_\gamma$}
\label{sec:choices_for_d}

\paragraph{Weighted element-wise distances:}
The immediate choice for $d_\gamma$ is given by
\begin{align}
  d_\gamma(z,z') = \sum\nolimits_k \gamma_k \cdot \norm{z_k-z_k'}^2,
\end{align}
where $k$ ranges over the layers (or more generally, over the network units)
and $\gamma=(\gamma_1,\dotsc,\gamma_L)$ is a non-negative vector. Note that
this choice for $d_\gamma$ does not propagate perturbations across the
network: if the target loss $\ell$ does not depend on $z_k$, then maximization
over $z_k^-$ immediately leads to $z_k^-=z_k^*$. If the target loss
$\ell$ is e.g.\ a Euclidean squared error, $\ell(z)=\norm{z_L-y}^2$, then it
is easy to show that maximization w.r.t.\ $z^-$ solely scales the terms in
$\ell$,
\begin{align}
  \max\nolimits_{z_L^-}\, \norm{z_L^--y}^2 - \sum\nolimits_k \tfrac{\gamma_k}{\beta} \cdot \norm{z_k^--z_k^*}^2
  = \tfrac{\gamma_{L}}{\gamma_{L}-\beta} \norm{z_L^*-y}^2.
\end{align}
We therefore conclude that weighted element-wise distances (actually any
element-wise divergence) will exhibit limited generalization benefits.

\paragraph{Reweighted network potentials:}
We recall from Section~\ref{sec:background} that commonly used network
potentials $U_\theta$ decompose into terms over units. We define the
reweighted network potential
\begin{align}
  U_\theta^{\odot\gamma}(z;x) &\defeq \sum\nolimits_k \gamma_k \big( G_k(z_k) - z_k W_{k,:} z + G_k^*(W_{k,:} z) \big)
\end{align}
for a positive vector $\gamma$. Note that $U_\theta^{\odot\gamma}$ induces the
same network dynamics as $U_\theta$, and therefore
$U_\theta^{\odot\gamma}(z^*(x))=0$. We set
\begin{align}
  d_\gamma(z,z^*) = U_\theta^{\odot\gamma}(z;x),
\end{align}
which we insert into~\eqref{eq:relaxed_gen_AT} to obtain
\begin{align}
  \label{eq:weighted_potential_AT}
  \min\nolimits_\theta \max\nolimits_{z^-} \ell(z^-) - \tfrac{1}{\beta} U_\theta^{\odot\gamma}(z^-;x) .
\end{align}
This is of course conceptually the same as~\eqref{eq:relaxed_simple_AT}, but
the additional freedom of reweighting terms in $U_\theta$ implies that (1)
back-propagated error signal are scaled differently at each unit, and that (2)
the parameter $\gamma$ steers the anisotropy of the adversarial region. More
formally, an internal unit $z_k$ has the network dynamics according to
\begin{align}
  \begin{split}
    0 &= \gamma_k \Paren{ f_k^{-1}(z_k) - W_{k,:} z } + \sum\nolimits_{k'>k} \gamma_{k'} (W_{k',:})\tr\!\Paren{ f_{k'}(W_{k',:}z) - z_{k'} } \\
    \iff z_k &= f_k\!\Paren{ W_{k,:} z + \sum\nolimits_{k'>k} \tfrac{\gamma_{k'}}{\gamma_k} (W_{k',:})\tr\!\Paren{ z_{k'} - f_{k'}(W_{k',:}z) } }.
  \end{split}
\end{align}
Consequently, the ratio $\gamma_{k'}/\gamma_k$ weighs the error signal
$(W_{k',:})\tr(z_{k'} - f_{k'}(W_{k',:}z))$ propagated from upstream units
$k'>k$ differently. In the case of~\eqref{eq:weighted_potential_AT} the
error signal corresponds to the adversarial perturbation propagated backwards
from the output units. In order for~\eqref{eq:weighted_potential_AT} to have
a well-defined solution, $\gamma_k$ may be constrained for output units. Since
we focus on the 1-strongly convex least squares loss,
$\norm{z_L-y}^2/2$, $\gamma_L$ has to at least satisfy
$1-\gamma_L/\beta<0$ or $\gamma_L>\beta$.

\begin{remark}
  Scaling the terms $D_k(z_k\| a_k)$ in $U^{\odot\gamma}_\theta$ according
  to $\gamma$ is different from using layer- or neuron-specific learning
  rates. Purely linear networks (together with linearized target losses) are
  unaffected by the choice of $\gamma$. Otherwise the back-propagated error
  signal is amplified (or reduced) before applying the unit non-linearity. One
  way to interpret this is, that $\gamma$ steers the finite-difference
  spacing---something that is not possible in standard back-propagation which
  always uses the derivative $f_k'$ for the error signal.
\end{remark}

According to Proposition~\ref{prop:gen_AROVR} the best option to increase the
difficulty of the learning task (and therefore robustness of the resulting
network) is to choose the parameter vector $\gamma>0$ to be as small as
possible---or equivalently, to set $\beta$ as large as
possible. Figure~\ref{fig:lipschitz_betas}(top) illustrates this property in
the case of a $784\text{-}256^2\text{-}10$ ReLU network trained on a
5000-samples subset of Fashion-MNIST~\cite{xiao2017fashion_mnist}, but it also
demonstrates the substantial slowdown for large $\beta$ in terms of test
accuracy (based on early stopping using a validation
set). MNIST~\cite{lecun2010mnist} yields qualitatively similar
graphs. Surprisingly, in Fig.~\ref{fig:lipschitz_betas}(bottom) the behavior
is far less coherent when trained on a larger dataset (and the networks are
therefore outside the interpolation regime).
As discussed in the next section, there are also limits on how large
$\beta$ is allowed to be in order to ensure a finite solution for
$\min_z -\beta\ell(z) + U_\theta(z;x)$. In Section~\ref{sec:targeted} a
further generalization of AROVR is presented, that can remove these
restrictions on $\beta$.

\begin{figure}[tb]
  \includegraphics[width=0.5\textwidth]{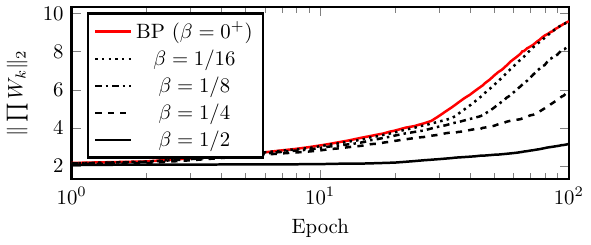}%
  \includegraphics[width=0.5\textwidth]{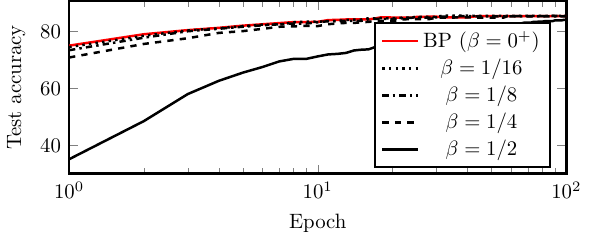}
  \includegraphics[width=0.5\textwidth]{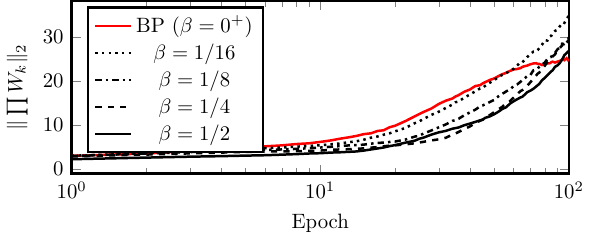}%
  \includegraphics[width=0.5\textwidth]{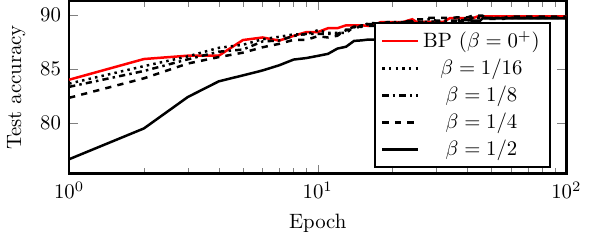}
  \caption{Evolution of Lipschitz estimates (left) and corresponding test
    accuracy (right, selected using 10\,000 validation samples). Top: graphs
    obtained using only 5000 samples from the training set. Bottom: training
    based on 50$\,$000 samples.}
  \label{fig:lipschitz_betas}
\end{figure}

Increasing the allowed adversarial perturbations at training time (either via
an increase of $\beta$ or by enlarging the allowed perturbations in other
adversarial training methods) can eventually be harmful for a network's
generalization ability. First, adversarial training can significantly slow
down the training progress (recall Fig.~\ref{fig:lipschitz_betas}(right)), and
second, an extremely robust classifier may exhibit both a large training error
as well as test error.

\subsection{Limitations of the (generalized) AROVR}

The differences between networks trained with back-propagation, ROVR and AROVR
are mostly observable in early training stages and diminish with larger
datasets or longer training times, in particular near and beyond the
interpolation regime.  The reason is that adversarially perturbed network
activities, $z^-\arg\min_z U_\theta(z;x)-\beta\ell(z)$, are close to the
unperturbed forward activities $z^*=\arg\min_z U_\theta(z;x)$ due to
$\ell'(z^*)\approx 0$ in the interpolation regime. Warm-starting AROVR-based
learning from network weights $\theta^*$ interpolating the training data
leaves the weights virtually unaffected. For linear (or linearized) loss
functions, it is straightforward to show the following:
\begin{proposition}
  \label{prop:interpolation}
  Let $\ell(z)=g\tr z$ and $\theta^*$ such that $\ell'(z^*)=0$, where
  $z^*=\arg\min_z U_{\theta^*}(z;x)$ are the neural activities assigned in the
  forward pass. Then
  \begin{align}
    z^* = z^- := \arg\min\nolimits_z \Braces{ U_{\theta^*}(z;x) - \ell(z) }
  \end{align}
  and therefore
  $\tfrac{\partial}{\partial\theta} \text{AROVR} = \tfrac{\partial}{\partial\theta} \Paren{ U_{\theta^*}(z^*;x) - U_{\theta^*}(z^-;x) } = 0$.
\end{proposition}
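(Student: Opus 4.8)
The plan is to characterize both the clean (forward) activities $z^*$ and the adversarial activities $z^-$ through their first-order optimality conditions and to show they coincide, after which the gradient claim follows by an envelope argument. The forward activities solve $z^*=\arg\min_z U_{\theta^*}(z;x)$, so they satisfy the stationarity condition $\nabla_z U_{\theta^*}(z^*;x)=0$ (interpreted in the KKT sense on constrained units). The adversarial activities solve $z^-=\arg\min_z \Braces{U_{\theta^*}(z;x)-\ell(z)}$, whose stationarity condition reads $\nabla_z U_{\theta^*}(z^-;x)=\nabla\ell(z^-)$. The crucial observation is that the loss enters only as the forcing term $\nabla\ell$ in this second equation, so $z^-$ can depart from $z^*$ only to the extent that this term is nonzero.

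Second, I would exploit the linearity of $\ell$ together with the interpolation hypothesis. Because $\ell(z)=g\tr z$, the gradient $\nabla\ell(z)=g$ is constant in $z$, and the assumption $\ell'(z^*)=0$ is exactly the statement $g=0$; hence the forcing term vanishes identically. The objective defining $z^-$ therefore reduces to $U_{\theta^*}(\cdot;x)$ itself. As recalled in Section~\ref{sec:background}, $U_{\theta^*}(\cdot;x)\ge 0$ with equality precisely at the forward dynamics $z_k=f_k(W_{k-1}z_{k-1})$, so the minimizer is unique and equals $z^*$; this yields $z^-=z^*$ without needing convexity of $U_{\theta^*}$. Conceptually, the adversarial perturbation is driven by the output error signal $\ell'(z^*)$, which is zero in the interpolation regime, so nothing propagates backward and the perturbed state coincides with the forward state.

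Finally, for the gradient I would write the AROVR in its contrastive form $\tfrac{1}{\beta}\min_z U_\theta(z;x)-\min_z\Braces{-\ell(z)+\tfrac{1}{\beta}U_\theta(z;x)}$ and apply the envelope theorem to each inner minimization. Since $z^*$ and $z^-$ are the respective minimizers, the total $\theta$-derivative equals the partial derivative taken with $z^*,z^-$ held fixed, namely (up to the positive constant $\tfrac{1}{\beta}$) the quantity $\tfrac{\partial}{\partial\theta}\Paren{U_{\theta^*}(z^*;x)-U_{\theta^*}(z^-;x)}$; substituting $z^-=z^*$ makes the bracket identically zero, so its derivative vanishes. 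The main obstacle I anticipate is justifying the envelope step cleanly, i.e.\ confirming that the inner maximization attains a finite unique maximizer so that the value function is differentiable and the derivative passes through the $\arg\min$. For a \emph{linear} $\ell$ this reduces to coercivity of $U_{\theta^*}(\cdot;x)$ alone, since adding a linear term preserves coercivity; in particular the curvature-based upper bound on $\beta$ discussed in the text is not needed here, so this technical point is mild.
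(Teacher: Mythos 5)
Your proof is correct and follows the same overall skeleton as the paper's one-sentence argument: establish $z^-=z^*$, then differentiate the contrastive form of the AROVR with the minimizers held fixed. Where you genuinely differ is in how $z^-=z^*$ is justified, and your version is the more rigorous one. The paper merely observes that $\ell'(z^*)=0$ and $\nabla_z U_{\theta^*}(z^*;x)=0$, i.e.\ that $z^*$ is a \emph{stationary point} of $z\mapsto U_{\theta^*}(z;x)-\ell(z)$; since $U_\theta(\cdot;x)$ is only block-convex for nonlinear networks, stationarity alone does not identify the global $\arg\min$ that defines $z^-$ (the paper itself concedes exactly this subtlety for nonlinear losses in the paragraph following the proposition). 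You instead use linearity to conclude $g=\ell'(z^*)=0$, so the perturbed objective literally \emph{is} $U_{\theta^*}(\cdot;x)$, whose minimizer is unique --- it is the single point where $U_{\theta^*}$ attains zero, namely the forward-pass activities --- which yields $z^-=z^*$ globally with no convexity assumption; this strictly stronger argument is what makes the proposition hold for arbitrary coercive network potentials. Two small remarks: first, with $g=0$ the AROVR objective $\max_z \ell(z)-\tfrac{1}{\beta}U_\theta(z;x)$ is identically zero in $\theta$ (using the standing assumption $\min_z U_\theta(z;x)=0$), so the gradient claim follows even without any envelope argument; second, your aside that ``adding a linear term preserves coercivity'' is false for general coercive functions (they may grow sublinearly), though it is harmless here precisely because $g=0$, and it does hold for the quadratic-growth Fenchel--Young potentials the paper actually uses.
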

The proposition follows by observing that both $\ell'(z^*)=0$ and
$\nabla_z U_{\theta^*}(z^*;x)=0$. For general nonlinear (but differentiable)
losses $\ell$, $z^*$ is at least a stationary point of the mapping
$z\mapsto U_{\theta^*}(z;x)-\ell(z)$ and might be consistently returned as the
minimizer of this mapping (e.g.\ when $\ell$ is the Euclidean loss and
$z^-$ is determined by block-coordinate descent warm-started from the forward
pass).\footnote{Note that similar concerns are valid for standard adversarial
  training, in particular when the perturbed sample is obtained by a
  gradient-based attack,
  e.g.~\cite{kurakin2016adversarial,madry2017towards}.}

In summary, AROVR-based training leads to potentially different solutions
(compared to back-propagation or ROVR) by following a different trajectory in
parameter space, but does not otherwise discriminate between solutions. If
this is a good property (since it does ultimately not perturb the training
objective) or an undesired feature (because it does not prevent over-training)
is a difficult question. These remarks only hold in the interpolation regime,
when the network has sufficient capacity to achieve a (near-)zero training
loss.

Further, a more refined analysis on the constraints on $\beta$ reveals, that
$\beta$ is more constrained than by $\beta\in(0,1)$ for 1-strongly convex
$\ell$ and $D_k(z_k\|a_k)$ (w.r.t.\ $z_k$). Using the Schur complement on the
respective Hessian matrix, it can be shown that a safe choice for
$\beta$ satisfies
\begin{align}
  \beta < \big( 1+\sum\nolimits_{k=1}^{L-1} \prod\nolimits_{j=k}^{L-1} \norm{W_j\tr W_j}_2 \big)^{-1}
\end{align}
for linear networks, i.e.\ when $D_k(z_k\|a_k)=\norm{z_k-a_k}^2/2$. This can
be extended to weighted network potentials $U_\theta^{\cdot\gamma}$ with more
complicated expressions. Nonlinear networks are surprisingly difficult to
analyse due to $U_\theta$ being only block-convex (but not jointly convex in
$\{z_k\}$). It is not obvious whether even linear target loss functions
$\ell$ always lead to bounded activities for all choices of $\beta>0$.  This
is one motivation to consider numerically ``safer'' variants of AROVR in the
following section, that nevertheless keep the underlying spirit of network
robustness.

\section{Targeted adversarial perturbations}
\label{sec:targeted}

The AROVR is by construction based on an untargeted adversarial attack (since
the target loss is maximized regardless of the true or predicted label). Now
we consider a targeted adversarial training task,
\begin{align}
  \min\nolimits_\theta \ell(z^-) \quad \text{s.t. } z^- = \arg\min\nolimits_z \ell(z;y^-) \quad U_\theta(z^-;x) - \min\nolimits_z U_\theta(z;x) \le \delta,
\end{align}
where $y^-$ is an adversarial target. With our usual assumption that
$\min_z U_\theta(z)=0$ for all $\theta$, this slightly simplifies to
\begin{align}
  \min\nolimits_\theta \ell(z^-) \qquad \text{s.t. } z^- = \arg\min\nolimits_z \ell(z;y^-) \qquad U_\theta(z^-;x) \le \delta.
\end{align}
We may replace the targeted adversarial loss $\ell(z;y^-)$ with a general
adversarial loss $\ell^-(z)$, which e.g.\ also allows untargeted attacks via
the choice $\ell^-(z)=-\ell(z)$,
\begin{align}
  \min\nolimits_\theta \ell(z^-) \qquad \text{s.t. } \ell^-(z^-) \le \min_{z:U_\theta(z;x)\le\delta} \ell^-(z) \qquad U_\theta(z^-;x) \le \delta,
\end{align}
where we also rephrased that $z^-$ is a respective minimizer of
$\ell^-$ (subject to the adversarial region constraint). By using a multiplier
$1/\beta>0$ for the region constraint, we arrive at a generalization of~\eqref{eq:learning},
\begin{align}
  \begin{split}
    & \min\nolimits_\theta \ell(z^-) \qquad \text{s.t. } \beta\ell^-(z^-) + U_\theta(z^-;x) \le \min\nolimits_z \beta\ell^-(z) + U_\theta(z;x) \\
    \equiv\; & \min\nolimits_\theta \ell(z^-) \qquad \text{s.t. } z^- = \arg\min\nolimits_z \beta\ell^-(z) + U_\theta(z;x).
  \end{split}
  \label{eq:targeted_AT}
\end{align}
Under the assumption that $z^-$ is unique,~\eqref{eq:targeted_AT} can be
artifically extended to
\begin{align}
  \begin{split}
    \min\nolimits_\theta \min\nolimits_{z^-} \ell(z^-) \quad \text{s.t. } z^- &= \arg\min\nolimits_z \beta \ell^-(z) + U_\theta(z;x) \\
    \text{or}\qquad \min\nolimits_\theta \max\nolimits_{z^-} \ell(z^-) \quad \text{s.t. } z^- &= \arg\min\nolimits_z \beta \ell^-(z) + U_\theta(z;x).
  \end{split}
\end{align}
Applying the ROVR with multiplier $1/\beta'>0$ on the first variant yields the
relaxed problem,
\begin{align}
  \min_\theta \min_z \left\{ \ell(z) + \tfrac{\beta}{\beta'}\ell^-(z) + \tfrac{1}{\beta'} U_\theta(z;x) \right\} 
  - \min_z \left\{ \tfrac{\beta}{\beta'}\ell^-(z) + \tfrac{1}{\beta'} U_\theta(z;x) \right\},
\end{align}
and for the second variant we obtain
\begin{align}
  \min_\theta \min_z \left\{ \tfrac{\beta}{\beta'}\ell^-(z) + \tfrac{1}{\beta'} U_\theta(z;x) \right\}
  - \min_z \left\{ \tfrac{\beta}{\beta'}\ell^-(z) - \ell(z) + \tfrac{1}{\beta'} U_\theta(z;x) \right\}.
\end{align}
After dividing by $\beta/\beta'$, renaming
$\beta'/\beta \leadsto\alpha$ and scaling the adversarial loss
$\ell^-$ with $\bar\alpha := 1-\alpha$ (which will be convenient later), these
programs can be stated as follows,
\begin{align}
  &\min_\theta \min_z \left\{ \bar\alpha\ell^-(z) + \alpha\ell(z) + \tfrac{1}{\beta} U_\theta(z;x) \right\}
    - \min_z \left\{ \bar\alpha\ell^-(z) + \tfrac{1}{\beta} U_\theta(z;x) \right\} \tag{ROVR${}_{\alpha,\beta}$} \\
  &\min_\theta \min_z \left\{ \bar\alpha\ell^-(z) + \tfrac{1}{\beta} U_\theta(z;x) \right\}
    - \min_z \left\{ \bar\alpha\ell^-(z) - \alpha\ell(z) + \tfrac{1}{\beta} U_\theta(z;x) \right\}, \tag{AROVR${}_{\alpha,\beta}$}
\end{align}
and are of importance for the remainder of this section. Depending on the
context, ROVR${}_{\alpha,\beta}$ and AROVR${}_{\alpha,\beta}$ refer to the
entire learning task (minimization over parameters $\theta$) or just
the contrastive loss inside the outermost minimization.

In contrast to the untargeted setting (where $\ell^-(z)=-\ell(z)$), the
loss-related terms do generally not cancel in any of the above subproblems
w.r.t.\ $z$. In the case of the Euclidean loss
($\ell(z;y)=\norm{z-y}^2/2$) we deduce
\begin{align}
  \begin{split}
    \bar\alpha\ell^-(z) + \alpha\ell(z) &= \bar\alpha\ell(z;y^-) + \alpha\ell(z;y) \doteq \tfrac{1}{2} \Norm{z - (\bar\alpha y^- + \alpha y)}^2 \\
    \bar\alpha\ell^-(z) - \alpha\ell(z) &= \bar\alpha\ell(z;y^-) - \alpha\ell(z;y) \doteq \tfrac{1-2\alpha}{2} \norm{z}^2 - z\tr (\bar\alpha y'- \alpha y) \\
    &\doteq
      \begin{cases}
        \tfrac{1-2\alpha}{2} \norm{z - (\bar\alpha y'-\alpha y)}^2 & \text{if } \alpha<1/2 \\
        \tfrac{1}{2} z\tr (y-y') & \text{if } \alpha=1/2.
      \end{cases}
  \end{split}
\end{align}
Hence, $\bar\alpha\ell^-(z) + \alpha\ell(z)$ can be interpreted as label
smoothing, and $\bar\alpha\ell^-(z) - \alpha\ell(z)$ is the loss corresponding
to targeted adversarial attacks. If $\ell$ and $\ell^-$ are w.l.o.g.\
1-strongly convex and $\alpha<1/2$, then $\bar\alpha\ell^--\alpha\ell$ is
$(1\!-\!2\alpha)$-strongly convex, and both subproblems in
AROVR${}_{\alpha,\beta}$ have bounded solutions for any $\beta>0$. This is one
advantage of AROVR${}_{\alpha,\beta}$ over AROVR${}_{\beta}$.

The programs ROVR${}_{\alpha,\beta}$ and AROVR${}_{\alpha,\beta}$ clearly
reduce to ROVR${}_\beta$ and AROVR${}_\beta$, respectively, for the choice
$\alpha=1$. When $\alpha\in(0,1)$, then both programs facilitate learning
under perturbed network dynamics induced by a modified network potential
$\tilde U_\theta(z;x) = \bar\alpha\beta\ell^-(z) + U_\theta(z;x)$.
How are ROVR${}_{\alpha,\beta}$ and AROVR${}_{\alpha,\beta}$ related for
general $\alpha$? It is not very difficult to see that the learning task
defined in AROVR${}_{\alpha,\beta}$ is ``harder'' than the one corresponding
to ROVR${}_{\alpha,\beta}$:
\begin{proposition}
  \label{prop:ROVR_AROVR}
  ROVR${}_{\alpha,\beta}$ is a lower bound of AROVR${}_{\alpha,\beta}$.
\end{proposition}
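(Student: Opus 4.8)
The plan is to prove the inequality pointwise in $\theta$ and then take the minimum over $\theta$ on both sides. For a fixed $\theta$, I would introduce the single-parameter value function
$$\phi(\mu) \defeq \min\nolimits_z \Braces{ \bar\alpha\,\ell^-(z) + \mu\,\alpha\,\ell(z) + \tfrac{1}{\beta} U_\theta(z;x) }.$$
Reading off the three inner minimizations appearing in ROVR${}_{\alpha,\beta}$ and AROVR${}_{\alpha,\beta}$, the contrastive loss of ROVR${}_{\alpha,\beta}$ equals $\phi(1)-\phi(0)$, while that of AROVR${}_{\alpha,\beta}$ equals $\phi(0)-\phi(-1)$. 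Thus the assertion ``ROVR${}_{\alpha,\beta}\le$ AROVR${}_{\alpha,\beta}$'' reduces to the single scalar inequality $\phi(1)-\phi(0)\le\phi(0)-\phi(-1)$, equivalently $\phi(1)+\phi(-1)\le 2\,\phi(0)$.

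The key observation is that $\phi$ is concave in $\mu$: for each fixed $z$ the bracketed objective is affine (indeed linear) in $\mu$, and a pointwise infimum of affine functions is concave. Applying midpoint concavity at $0=\tfrac12\cdot 1+\tfrac12\cdot(-1)$ yields $\phi(0)\ge\tfrac12\phi(1)+\tfrac12\phi(-1)$, which is exactly $\phi(1)+\phi(-1)\le 2\,\phi(0)$. Since this holds for every $\theta$, minimizing over $\theta$ preserves the inequality and establishes the proposition.

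There is no substantial obstacle here; the whole argument is a one-line concavity estimate once the common middle term $\phi(0)=\min_z\{\bar\alpha\ell^-(z)+\tfrac{1}{\beta}U_\theta(z;x)\}$ is identified as the quantity that is \emph{subtracted} in ROVR${}_{\alpha,\beta}$ and \emph{added} (as the leading term) in AROVR${}_{\alpha,\beta}$. The only point requiring care is finiteness of $\phi(\pm1)$ and $\phi(0)$ so that the rearrangement is valid over $\mathbb{R}$; this is guaranteed under the paper's standing assumptions (for instance when $\alpha<1/2$ and $\ell,\ell^-$ are $1$-strongly convex, all three inner problems are strongly convex and hence attain finite minima, as already noted for AROVR${}_{\alpha,\beta}$). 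Should one of the values degenerate to $-\infty$, the inequality still holds in the extended reals.
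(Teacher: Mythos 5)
Your proof is correct, and it reaches the paper's inequality by a genuinely different (more abstract) route. The paper fixes the minimizer $\tilde z$ of the common middle problem $\min_z\{\bar\alpha\ell^-(z)+\tfrac{1}{\beta}U_\theta(z;x)\}$ and chains inequalities through the mediating quantity $\alpha\ell(\tilde z)$: evaluating the ROVR${}_{\alpha,\beta}$ inner minimum at $\tilde z$ bounds the left-hand side above by $\alpha\ell(\tilde z)$, and the same quantity bounds the AROVR${}_{\alpha,\beta}$ side from below. You instead package all three inner problems into the scalar value function $\phi(\mu)=\min_z\{\bar\alpha\ell^-(z)+\mu\alpha\ell(z)+\tfrac{1}{\beta}U_\theta(z;x)\}$, observe that $\phi$ is concave as an infimum of functions affine in $\mu$, and read off the claim as midpoint concavity $\phi(1)+\phi(-1)\le 2\phi(0)$; the two proofs are really the same mechanism (affineness of the objective in the coefficient of $\ell$), since the paper's chain is what one obtains by using the supporting affine function of $\phi$ at $\mu=0$. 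What your version buys: it requires no attainment of the middle minimizer (the superadditivity of infima, $\inf_z(f+g)\ge\inf_z f+\inf_z g$, suffices, whereas the paper tacitly assumes $\tilde z$ exists), and the concavity viewpoint generalizes for free to other placements of the evaluation points, e.g.\ comparing the contrastive losses for different strengths of the $\ell$-perturbation. What the paper's version buys: it is elementary and self-contained, and it exhibits the interpretable intermediate bound $\alpha\ell(\tilde z)$---the target loss at the perturbed equilibrium---separating the two sides. Two small points of care in your write-up: the inner objective is affine, not linear, in $\mu$ (harmless), and your finiteness discussion is the right one---the only genuinely degenerate case is $\phi(0)=-\infty$, where both contrastive losses are ill-defined; whenever $\phi(0)$ is finite the inequality holds in the extended reals exactly as you state.
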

All proofs are in the appendix.
Without further assumptions on $\ell$ and $\ell^-$ it is not possible to
relate e.g.\ AROVR (i.e.\ AROVR${}_{\alpha=1,\beta}$) and
AROVR${}_{\alpha,\beta}$ for general $\alpha$. The following simple
counterexamples further demonstrate that AROVR${}_{\alpha,\beta}$ can be both
an upper and a lower bound of $\ell(z^*)$ (with
$z^*=\arg\min_z U_\theta(z;x)$ being determined by the forward pass),
depending on the choice of $\ell^-$:
\begin{enumerate}
\item With $\ell^-\equiv 0$ or $\ell^-=-\ell$, AROVR${}_{\alpha,\beta}$ reduces to AROVR and is therefore an upper bound of $\ell(z^*)$.
\item With $\ell^-\equiv \frac{\alpha}{\bar \alpha} \ell$ we obtain ROVR, which is a lower bound of $\ell(z^*)$.
\end{enumerate}
In order for AROVR${}_{\alpha,\beta}$ to be an upper bound on
$\ell(z^*)$, it can be shown that $\ell^-$ needs to be ``aligned'' with
$-\ell$ such that $\ell(z^*)\le \ell(\tilde z)$, where
$\tilde z=\arg\min_z \beta\ell^-(z)+U_\theta(z;x)$ is the solution of the
perturbed dynamics.
If we change $\alpha$ and $\beta$ such that the product
$\bar\alpha\beta$ remains constant, then we have the following result:
\begin{proposition}
  \label{prop:AROVR_strengthening}
  Assume that $\ell(z)\ge 0$ for all $z$ and let
  $\alpha,\,\alpha' \in (0,1)$ and $\beta,\,\beta'>0$ be given such that
  $\alpha'\ge \alpha$ and $\bar\alpha\beta=\bar\alpha'\beta'$. Then
  \begin{align}
    \beta \text{AROVR}_{\alpha,\beta} \le \beta' \text{AROVR}_{\alpha',\beta'}.
  \end{align}
\end{proposition}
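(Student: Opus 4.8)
The plan is to clear the prefactor and exploit the invariance $\bar\alpha\beta = \bar\alpha'\beta'$ that is built into the hypothesis. Writing $c := \bar\alpha\beta$, I would multiply the definition of AROVR${}_{\alpha,\beta}$ through by $\beta$ and distribute, obtaining
\begin{align*}
  \beta\,\text{AROVR}_{\alpha,\beta}
  = \min_z \big\{ c\,\ell^-(z) + U_\theta(z;x) \big\}
  - \min_z \big\{ c\,\ell^-(z) - \alpha\beta\,\ell(z) + U_\theta(z;x) \big\}.
\end{align*}
The key observation is that this expression depends on $(\alpha,\beta)$ only through the two products $c=\bar\alpha\beta$ and $\alpha\beta$. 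Since the hypothesis fixes $c=\bar\alpha\beta=\bar\alpha'\beta'$, the first minimization is literally identical for $(\alpha,\beta)$ and $(\alpha',\beta')$, so the label-smoothing term cancels exactly in the difference $\beta'\text{AROVR}_{\alpha',\beta'} - \beta\,\text{AROVR}_{\alpha,\beta}$. The claimed inequality therefore collapses to a comparison of the two subtracted minima, i.e.\ it suffices to show
\begin{align*}
  \min_z \big\{ c\,\ell^-(z) - \alpha'\beta'\,\ell(z) + U_\theta(z;x) \big\}
  \le \min_z \big\{ c\,\ell^-(z) - \alpha\beta\,\ell(z) + U_\theta(z;x) \big\}.
\end{align*}

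Next I would pin down the ordering of the remaining coefficients $\alpha\beta$ and $\alpha'\beta'$. The cleanest route is the identity $\alpha\beta = (1-\bar\alpha)\beta = \beta - \bar\alpha\beta = \beta - c$, and likewise $\alpha'\beta' = \beta' - c$, so that $\alpha'\beta' - \alpha\beta = \beta' - \beta$. It then remains to check $\beta' \ge \beta$: from $\alpha'\ge\alpha$ we get $\bar\alpha' \le \bar\alpha$, and dividing the constraint $\bar\alpha'\beta' = \bar\alpha\beta$ gives $\beta' = (\bar\alpha/\bar\alpha')\,\beta \ge \beta$. Hence $\alpha'\beta' \ge \alpha\beta \ge 0$.

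The final step invokes the assumption $\ell\ge 0$. Setting $h(z) := c\,\ell^-(z) + U_\theta(z;x)$, for every $z$ the nonnegativity of $\ell$ together with $\alpha'\beta'\ge\alpha\beta$ yields $\alpha'\beta'\,\ell(z) \ge \alpha\beta\,\ell(z)$, hence the pointwise inequality $h(z) - \alpha'\beta'\,\ell(z) \le h(z) - \alpha\beta\,\ell(z)$; taking the infimum over $z$ on both sides preserves the order and delivers exactly the reduced inequality above. There is no serious obstacle here — the whole argument is a monotonicity-of-the-minimum estimate — and the only genuinely load-bearing move is the first one, recognizing that scaling by $\beta$ reparametrizes the two free variables into the single pair $(\bar\alpha\beta,\alpha\beta)$ so that the constraint cancels the first minimum outright. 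The one point warranting care is well-posedness: the common term $\min_z\{c\,\ell^-(z)+U_\theta(z;x)\}$ must be finite so that no $\infty-\infty$ arises, which is ensured by coercivity of $U_\theta$ (the pointwise-infimum comparison itself remains valid even if one of the subtracted minima is $-\infty$).
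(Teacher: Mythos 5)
Your proof is correct and follows essentially the same route as the paper's: multiply through by $\beta$ so the hypothesis $\bar\alpha\beta=\bar\alpha'\beta'$ makes the first minima identical, reduce to comparing the subtracted minima, establish $\alpha'\beta'-\alpha\beta=\beta'-\beta\ge 0$, and invoke $\ell\ge 0$. The only (immaterial) difference is that you compare the minima via a pointwise inequality and monotonicity of the infimum, whereas the paper evaluates the left-hand minimum at the right-hand side's minimizer $z^-$; your added remark on avoiding $\infty-\infty$ is a nice touch but not needed beyond what the paper assumes.
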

\begin{remark}
  After dividing both sides by
  $\rho:=\bar\alpha\beta=\bar\alpha'\beta'$ and renaming
  $\rho\leadsto\beta$, a different way to state the above result is as
  follows:
  \begin{align}
    \begin{split}
      &\min_z\Braces{ \ell^-(z) + \tfrac{1}{\beta} U_\theta(z;x) }
        - \min_z\Braces{ \ell^-(z) - \tfrac{\alpha}{1-\alpha}\ell(z) + \tfrac{1}{\beta} U_\theta(z;x) } \\
      &\le \min_z\Braces{ \ell^-(z) + \tfrac{1}{\beta} U_\theta(z;x) }
        - \min_z\Braces{ \ell^-(z) - \tfrac{\alpha'}{1-\alpha'}\ell(z) + \tfrac{1}{\beta} U_\theta(z;x) }.
    \end{split}
    \label{eq:AROVR_strengthening}
  \end{align}
  With $\alpha'\ge\alpha$ we have
  $\alpha/(1-\alpha)\ge \alpha'/(1-\alpha')$, and therefore the r.h.s.\
  of~\eqref{eq:AROVR_strengthening} is using a stronger (untargeted)
  adversarial pertubation than the one appearing in the l.h.s. This shall make
  the claim in Prop.~\ref{prop:AROVR_strengthening} more accessible. Without
  further assumptions on $\ell$ and $\ell^-$, the more desirable relation
  $\text{AROVR}_{\alpha,\beta} \le \text{AROVR}_{\alpha',\beta}$ whenever
  $\alpha'\ge \alpha$ does not hold.
\end{remark}

Instances of $\ell^-$ can be untargeted and targeted attacks as discussed
above, dropout-like regularization (see below) etc. If we let
$\beta\to 0^+$, then the following can be shown:
\begin{proposition}
  \label{prop:equivalence}
  Let $\ell$, $\ell^-$ and $U_\theta(\cdot;x)$ be twice continuously
  differentiable. Then
  \begin{align}
    \lim_{\beta\to 0^+} \tfrac{\partial}{\partial\theta} \text{ROVR}_{\alpha,\beta} = \lim_{\beta\to 0^+} \tfrac{\partial}{\partial\theta} \text{AROVR}_{\alpha,\beta}
    = \alpha \tfrac{d}{d\theta} \ell(z^*) \quad\text{s.t. } z^*=\arg\min_z U_\theta(z;x).
  \end{align}
\end{proposition}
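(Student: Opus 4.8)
The plan is to treat both programs through a single envelope family. Define
\[
A_\beta(c) := \min\nolimits_z \Braces{ \bar\alpha\ell^-(z) + c\,\alpha\ell(z) + \tfrac{1}{\beta} U_\theta(z;x) },
\]
so that $\text{ROVR}_{\alpha,\beta} = A_\beta(1) - A_\beta(0)$ and $\text{AROVR}_{\alpha,\beta} = A_\beta(0) - A_\beta(-1)$; in both cases we must differentiate a contrastive difference $A_\beta(c_2)-A_\beta(c_1)$ over a unit interval $c_2-c_1=1$. For $\beta$ small enough the term $\tfrac{1}{\beta}U_\theta$ dominates, so the minimizer $z(c) := \arg\min_z\{\,\cdots\}$ is unique, lies in a neighbourhood of $z^*$, and (by the implicit function theorem, using twice continuous differentiability) is a $C^1$ function of $c$ and $\theta$. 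Because the inner minimizer is stationary, the envelope theorem gives $\tfrac{\partial}{\partial\theta}A_\beta(c) = \tfrac{1}{\beta}\,\tfrac{\partial}{\partial\theta}U_\theta(z(c);x)$, with the $\theta$-derivative taken at fixed $z=z(c)$.

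Next I would differentiate the first-order condition $\bar\alpha\nabla\ell^-(z(c)) + c\alpha\nabla\ell(z(c)) + \tfrac{1}{\beta}\nabla_z U_\theta(z(c);x)=0$ with respect to $c$, obtaining $\big[\bar\alpha\nabla^2\ell^- + c\alpha\nabla^2\ell + \tfrac{1}{\beta}\nabla_{zz}^2 U_\theta\big]\tfrac{dz}{dc} = -\alpha\nabla\ell(z(c))$. For small $\beta$ the bracket is $\tfrac{1}{\beta}\nabla_{zz}^2 U_\theta$ to leading order, so $\tfrac{dz}{dc} = -\beta\alpha\,[\nabla_{zz}^2 U_\theta(z^*;x)]^{-1}\nabla\ell(z(c)) + o(\beta)$, an $O(\beta)$ quantity. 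Writing the contrastive $\theta$-gradient via the fundamental theorem of calculus and the chain rule as $\tfrac{1}{\beta}\int_{c_1}^{c_2}\partial_{\theta z}^2 U_\theta(z(c);x)\,\tfrac{dz}{dc}\,dc$ and substituting the expression for $dz/dc$, the two factors of $\beta$ cancel. Since $z(c)\to z^*$ uniformly for $c\in[c_1,c_2]$ as $\beta\to 0^+$, the integrand converges to the $c$-independent value $-\alpha\,\partial_{\theta z}^2 U_\theta(z^*;x)\,[\nabla_{zz}^2 U_\theta(z^*;x)]^{-1}\nabla\ell(z^*)$, and integrating over the unit interval leaves it unchanged; hence ROVR$_{\alpha,\beta}$ and AROVR$_{\alpha,\beta}$ share the same limit.

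It then remains to identify this limit with $\alpha\,\tfrac{d}{d\theta}\ell(z^*)$. I would differentiate the equilibrium condition $\nabla_z U_\theta(z^*;x)=0$ implicitly in $\theta$ to get $\tfrac{dz^*}{d\theta} = -[\nabla_{zz}^2 U_\theta(z^*;x)]^{-1}\,\partial_{z\theta}^2 U_\theta(z^*;x)$, so that $\tfrac{d}{d\theta}\ell(z^*)=\nabla\ell(z^*)\tr\tfrac{dz^*}{d\theta} = -\partial_{\theta z}^2 U_\theta(z^*;x)\,[\nabla_{zz}^2 U_\theta(z^*;x)]^{-1}\nabla\ell(z^*)$, using symmetry of the Hessian; this matches the computed limit, scaled by $\alpha$, exactly.

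I expect the main obstacle to be the rigorous justification of the $\tfrac{1}{\beta}$-scaled limit: one must verify that the perturbed minimizers $z(c)$ remain in a neighbourhood of $z^*$ on which $\nabla_{zz}^2 U_\theta$ is uniformly invertible, bound the $o(\beta)$ remainder in $dz/dc$ uniformly in $c$, and invoke dominated convergence to exchange $\beta\to 0^+$ with the integral over $c$. For AROVR$_{\alpha,\beta}$ one additionally needs the $c=-1$ minimizer to exist for small $\beta$; this is automatic, since $\tfrac{1}{\beta}U_\theta$ enforces a strong convexity that eventually overwhelms the possibly concave term $-\alpha\ell$. These are precisely the mild conditions under which the $\beta\to 0^+$ equivalence with back-propagation is classically established, here extended to the $\alpha$-weighted targeted objectives.
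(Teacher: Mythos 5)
Your proposal is correct, and it reaches the stated identity by a genuinely different mechanism than the paper. The paper fixes the two minimizers $z^+$ and $z^-$ of the AROVR subproblems, differentiates their first-order conditions implicitly \emph{in $\beta$}, and then resolves the $1/\beta$ factor by recognizing $\lim_{\beta\to 0^+}\tfrac{1}{\beta}\bigl(\partial_\theta U_\theta(z^+;x)-\partial_\theta U_\theta(z^-;x)\bigr)$ as a derivative in $\beta$ at $0^+$ (the numerator vanishes at $\beta=0$ because $z^+=z^-=z^*$ there); the ROVR case is then dismissed as ``analogous.'' You instead embed both programs in the one-parameter family $A_\beta(c)$, differentiate the first-order condition \emph{in the loss weight $c$}, and write the contrastive gradient as $\tfrac{1}{\beta}\int_{c_1}^{c_2}\partial^2_{\theta z}U_\theta(z(c);x)\,\tfrac{dz}{dc}\,dc$, where the explicit $O(\beta)$ scaling of $dz/dc$ cancels the $1/\beta$ prefactor. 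This buys two things: ROVR (interval $[0,1]$) and AROVR (interval $[-1,0]$) are covered by a single argument rather than two parallel ones, and it becomes structurally transparent that the limit depends only on the length of the $c$-interval --- which is exactly where the equality of the two limits and the factor $\alpha$ come from. The cost is that you must justify uniform-in-$c$ convergence of the integrand and exchange the $\beta\to 0^+$ limit with the integral, whereas the paper only needs a pointwise difference-quotient limit; both arguments rest on the same regularity left implicit in the proposition's hypotheses, namely invertibility of $\nabla^2_z U_\theta(z^*;x)$ in a neighbourhood of $z^*$ and existence/uniqueness/convergence of the perturbed minimizers, which you at least flag explicitly. Your closing identification of the common limit with $\alpha\tfrac{d}{d\theta}\ell(z^*)$, via implicit differentiation of $\nabla_z U_\theta(z^*;x)=0$ and symmetry of the Hessian, coincides with the paper's final step.
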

This result can be obtained by implicit differentiation at
$\beta=0^+$. The perturbation induced by $\ell^-$ cancels, and only the target
loss $\ell$ remains (cf.\ appendix). This also implies that we
do not expect to adjust the trained network to compensate for training-time
perturbations, which is dissimilar to regularization techniques such as
dropout. Since infinitesimal $\beta$ (or very small $\beta$) behave
essentially like regular back-propagation, our focus is on the finite setting.

Interestingly, the proof of Proposition~\ref{prop:equivalence} reveals that
\emph{linear} perturbations of $U_\theta$ do in certain settings not affect
the above claim. Consequently we also consider replacing $\ell^-(z)$ with
$\ell^-(z) + \tfrac{1}{\beta} g\tr z$ for an arbitrary vector $g$ (that is
independent of $\beta$, $\theta$ and $z$). After introducing the shorthand
notation $U_\theta^g(z;x)=U_\theta(z;x)+g\tr z$, a further extension of AROVR
is therefore given by
\begin{align}
  \min_\theta \min_z \left\{ \bar\alpha\ell^-(z) + \tfrac{1}{\beta} U_\theta^g(z;x) \right\}
    - \min_z \left\{ \bar\alpha\ell^-(z) - \alpha\ell(z) + \tfrac{1}{\beta} U_\theta^g(z;x) \right\}. \tag{AROVR${}_{\alpha,\beta}^g$}
\end{align}
In this setting the influence of the perturbation vector $g$ does not vanish
with $\beta\to 0^+$, but has (to first order) the same effect on both
minimizers inside AROVR${}_{\alpha,\beta}^g$ (and analogously for
ROVR${}_{\alpha,\beta}^g$).
Of course, instead of using a fixed vector, $g$ can be randomly drawn from a
fixed distribution, and learning in that setting corresponds to a stochastic
optimization task,
\begin{align}
\min\nolimits_\theta \EE{g}{ \text{AROVR}_{\alpha,\beta}^g }.
\end{align}
If $g$ is a vector with elements randomly drawn from $\{0,M\}$ (where
$M>0$ is a sufficiently large constant), then we arrive at a dropout-like
regularization. Unlike proper dropout~\cite{srivastava2014dropout}, no
adaptation of the network parameters is needed (at least for infinitesimal
small $\beta$) at test time. This seems surprising but
AROVR${}_{\alpha,\beta}^g$ is different from standard dropout, e.g.\ setting a
neural activity to zero influences both upstream and downstream units in
contrast to regular dropout.

Finally, with $\alpha\to 0^+$, the two solutions of the subproblems inside AROVR${}_{\alpha,\beta}$,
\begin{align}
  \begin{split}
    z^+ &= \arg\min\nolimits_z \bar\alpha\beta\ell^-(z) + U_\theta(z;x) \\
    z^- &= \arg\min\nolimits_z \bar\alpha\beta\ell^-(z) - \alpha\beta\ell(z) + U_\theta(z;x),
  \end{split}
\end{align}
become increasingly similiar. In order to utilize the same learning rate in
the training process (and in light of Proposition~\ref{prop:equivalence}), the
actual loss function used in our implementation for gradient computation is
$\tfrac{1}{\alpha}\text{AROVR}_{\alpha,\beta}$.

\section{Empirical Observations}

\paragraph{Implementation:}
Since $z\mapsto U_\theta(z;x) + h(z)$ (where $h(z)$ is $\beta\ell(z)$,
$-\beta\ell(z)$ or $\bar\alpha\beta\ell^-(z)-\alpha\beta\ell(z)$) is
layer-wise convex, inferring $z$ is based on block coordinate descent by
successively updating $z_k$ in (in our implementation 20) backward passes
(after an initial forward pass). Since this inference procedure is not well
supported on GPUs (and we are using tiny datasets and networks), inference and
gradient-based learning is entirely CPU-based. One epoch of AROVR is
approximately $5\times$ slower than back-propagation.

\paragraph{Setup:}
The target loss $\ell$ is always the Euclidean loss due to closed-form
expressions for $z_L$. The attack loss $\ell^-$ is also the Euclidean loss for
any randomly sampled label different from the ground truth. In order to keep
$\bar\alpha\ell^-(z)-\alpha\ell(z)$ slightly strongly convex, we chose
$\alpha=0.49$. Given the results in Fig.~\ref{fig:lipschitz_betas}, the choice
$\beta=1/4$ is taken as a good compromise between convergence speed and the
ability to observe a difference in the resulting models. The perturbation
vector $g$ in AROVR${}_{\alpha,\beta}^g$ is element-wise sampled from
$\mathcal{N}(0,1/16)$ for each mini-batch. The Adam optimizer with a learning
rate of $2\cdot 10^{-4}$ is used throughout.

Table~\ref{tab:results} depicts the results of training a
768-$256^2$-10 ReLU network on MNIST and Fashion-MNIST, respectively,
accumulated over~3 random seeds and with and without back-propagation based
pretraining. In order to be in the interpolation regime, only 5000 (randomly
drawn) training samples are used. The reported test accuracy corresponds to
the model with highest validation accuracy (on a held-out set of 10\,000
samples). It is clear that AROVR does not escape the pretrained solution, in
accordance with Proposition~\ref{prop:interpolation}. Both
AROVR${}_{\alpha,\beta}$ and AROVR${}_{\alpha,\beta}^g$ reach similar
solutions regardless of pretraining. The improvements of the AROVR variants
are noticeable but smaller than anticipated, in particular for
Fashion-MNIST. One possible explanation might be that Fashion-MNIST is more
diverse, and adversarial training is not able to identify many regularities in
the dataset.

\begin{table}[tb]
  \setlength{\tabcolsep}{3pt}
  \footnotesize
  \begin{tabular}{l|c|ccc|ccc}
    \multicolumn{2}{c}{} & \multicolumn{3}{|c|}{w/o pretraining} & \multicolumn{3}{c}{w/ pretraining} \\ \hline
    & Backprop & AROVR & AROVR${}_{\alpha,\beta}$ & AROVR${}_{\alpha,\beta}^g$ & AROVR & AROVR${}_{\alpha,\beta}$ & AROVR${}_{\alpha,\beta}^g$ \\ \hline
    MNIST & $95.60\cpm0.13$ & $96.04\cpm0.15$ & $96.19\cpm0.13$ & $96.35\cpm0.06$ & $95.67\cpm0.14$ & $96.17\cpm0.10$ & $96.41\cpm0.10$ \\
    F-MNIST & $85.49\cpm0.09$ & $85.63\cpm0.32$ & $85.78\cpm0.10$ & $85.98\cpm0.21$ & $85.31\cpm0.19$ & $85.86\cpm0.33$ & $85.82\cpm0.25$
  \end{tabular}
  \caption{Test accuracy obtained by the various methods. See the text for details.}
  \label{tab:results}
\end{table}

\section{Discussion}

In the initial stages of this work we expected that robustness w.r.t.\
internal neurons is connected to some form of feature invariance and will lead
to significantly better generalization ability, in particular when training
with very small datasets. In practice, the observed improvements are rather
minor, and whether this is due to the simplistic network architecture,
limitations of the training process, or due to other factors is subject of
future research. So far we have shown certain theoretical guarantees of making
the training loss more difficult without directly interfering with the network
parameters or strongly biasing the obtained model (unlike many standard
regularization techniques such as weight decay).
Biological intelligence is remarkable in its intrinsic robustness and being
able to detect invariances from rather limited datasets---properties that are
largely absent in current deep learning methods. Hence, we believe that this
work is a small step towards a more economical use of training data.

A completely different line of future research is on the relation between
simple but scalable adversarial
training~\cite{kurakin2016adversarial,madry2017towards}, certified
robustness~\cite{wong2018provable,raghunathan2018semidefinite} and AROVR.

\bibliographystyle{splncs04}
\bibliography{main}


\appendix

\section{Proofs of the propositions}

\begin{proof}[Proposition~\ref{prop:ROVR_AROVR}]
  We show that for all $\theta$ it holds that
  \begin{align}
    \begin{split}
      &\min_z \left\{ \bar\alpha\ell^-(z) + \alpha\ell(z) + \tfrac{1}{\beta} U_\theta(z;x) \right\}
      - \min_z \left\{ \bar\alpha\ell^-(z) + \tfrac{1}{\beta} U_\theta(z;x) \right\} \\
      &\le \min_z \left\{ \bar\alpha\ell^-(z) + \tfrac{1}{\beta} U_\theta(z;x) \right\}
      - \min_z \left\{ \bar\alpha\ell^-(z) - \alpha\ell(z) + \tfrac{1}{\beta} U_\theta(z;x) \right\}.
    \end{split}
  \end{align}
  Let
  \begin{align}
    \tilde z := \arg\min\nolimits_z \bar\alpha\ell^-(z) + \tfrac{1}{\beta} U_\theta(z;x)
  \end{align}
  be the solution of the perturbed neural network dynamics. Consequently,
  \begin{align}
    \begin{split}
      \text{l.h.s.} &= \min_z \left\{ \bar\alpha\ell^-(z) + \alpha\ell(z) + \tfrac{1}{\beta} U_\theta(z;x) \right\}
      - \bar\alpha\ell^-(\tilde z) - \tfrac{1}{\beta} U_\theta(\tilde z;x) \\
      {} &\le \bar\alpha\ell^-(\tilde z) + \alpha\ell(\tilde z) + \tfrac{1}{\beta} U_\theta(\tilde z;x)
      - \bar\alpha\ell^-(\tilde z) - \tfrac{1}{\beta} U_\theta(\tilde z;x) \\
      {} &= \alpha\ell(\tilde z) \\
      {} &= \bar\alpha\ell^-(\tilde z) + \tfrac{1}{\beta} U_\theta(\tilde z;x) + \alpha\ell(\tilde z)
      - \bar\alpha\ell^-(\tilde z) - \tfrac{1}{\beta} U_\theta(\tilde z;x) \\
      {} &\le \bar\alpha\ell^-(\tilde z) + \tfrac{1}{\beta} U_\theta(\tilde z;x)
      + \max_z\left\{ \alpha\ell(z) - \bar\alpha\ell^-(z) - \tfrac{1}{\beta} U_\theta(z;x) \right\} \\
      {} &= \bar\alpha\ell^-(\tilde z) + \tfrac{1}{\beta} U_\theta(\tilde z;x)
      - \min_z \left\{ -\alpha\ell(z) + \bar\alpha\ell^-(z) + \tfrac{1}{\beta} U_\theta(z;x) \right\} \\
      &= \text{r.h.s.,}
    \end{split}
  \end{align}
  which completes the proof. \qed
\end{proof}

\begin{proof}[Proposition~\ref{prop:AROVR_strengthening}]
  We denote $\rho:=\bar\alpha\beta = \bar\alpha'\beta'$ and abbreviate
  $U_\theta(z;x)$ as $U(z)$. First, we observe that
  $\bar\alpha'\le \bar\alpha$ and $\beta\le\beta'$ as
  \begin{align}
    \beta = \frac{\rho}{\bar\alpha} \le \frac{\rho}{\bar\alpha'} = \beta'.
  \end{align}
  We now have to show that
  \begin{align}
    \begin{split}
      &\min_z\Braces{ \rho\ell^-(z) + U(z) } - \min_z\Braces{ \rho\ell^-(z) - \alpha\beta\ell(z) + U(z) } \\
      &\le \min_z\Braces{ \rho\ell^-(z) + U(z) } - \min_z\Braces{ \rho\ell^-(z) - \alpha'\beta'\ell(z) + U(z) }
    \end{split}
  \end{align}
  or
  \begin{align}
    \min_z\Braces{ \rho\ell^-(z) - \alpha'\beta'\ell(z) + U(z) } \le \min_z\Braces{ \rho\ell^-(z) - \alpha\beta\ell(z) + U(z) }.
  \end{align}
  Let $z^-$ be the minimizer of the r.h.s. We have the chain of inequalities,
  \begin{align}
    \begin{split}
      \text{l.h.s.} &= \min_z\Braces{ \rho\ell^-(z) - \alpha'\beta'\ell(z) + U(z) } \\
      &\le \rho\ell^-(z^-) - \alpha'\beta'\ell(z^-) + U(z^-) \\
      &\le \rho\ell^-(z^-) - \alpha\beta\ell(z^-) + U(z^-) = \text{r.h.s.},
    \end{split}
  \end{align}
  where we made use of
  \begin{align}
    \alpha\beta - \alpha'\beta' = (1-\bar\alpha)\beta - (1-\bar\alpha')\beta' = \beta - \rho - \beta' + \rho = \beta -\beta' \le 0
  \end{align}
  and consequently $(\alpha\beta - \alpha'\beta')\ell(z^-) \le 0$. \qed
\end{proof}

\begin{proof}[Proposition~\ref{prop:equivalence}]
  We focus on AROVR${}_{\alpha,\beta}$, since ROVR${}_{\alpha,\beta}$ works
  analogously. For brevity we write $U(z)$ for $U_\theta(z;x)$. Recall that
  derivatives of functions $f:\mathbb{R}^n\to\mathbb{R}^m$ are
  $m\times n$ matrices, and therefore
  $\nabla f(z) = \frac{d}{dz}f(z)\tr$ for real-valued functions $f$. Now the
  solutions of the respective subproblems inside
  AROVR${}_{\alpha,\beta}$ are denoted as
  \begin{align}
    \begin{split}
      z^+ &= \arg\min\nolimits_z \bar\alpha\beta\ell^-(z) + U(z) \\
      z^- &= \arg\min\nolimits_z \bar\alpha\beta\ell^-(z) - \alpha\beta\ell(z) + U(z).
    \end{split}
  \end{align}
  Consequently, $z^+$ and $z^-$ satisfy
  \begin{align}
    \begin{split}
      0 &= \bar\alpha\beta\nabla\ell^-(z^+) + \nabla U(z^+) \\
      0 &= \bar\alpha\beta\nabla\ell^-(z^-) - \alpha\beta\nabla\ell(z^-) + \nabla U(z^-).
    \end{split}
  \end{align}
  Both $z^+$ and $z^-$ depend on $\beta$, hence implicit differentiation w.r.t.\ $\beta$
  yields
  \begin{align}
    \begin{split}
      0 &= \Paren{ \bar\alpha\beta\nabla^2\ell^-(z^+) + \nabla^2 U(z^+) } \Paren{\frac{dz^+}{d\beta}}\tr + \bar\alpha\nabla\ell^-(z^+) \\
      0 &= \Paren{ \bar\alpha\beta\nabla^2\ell^-(z^-) - \alpha\beta\nabla\ell(z^-) + \nabla^2 U(z^-) } \Paren{\frac{dz^-}{d\beta}}\tr
          + \bar\alpha\nabla\ell^-(z^-) - \alpha\nabla\ell(z^-),
    \end{split}
  \end{align}
  from which we deduce that
  \begin{align}
    \begin{split}
      \frac{dz^+}{d\beta}\big|_{\beta=0^+} &= -\bar\alpha \nabla\ell^-(z^+)\tr \Paren{ \nabla^2 U(z^+)}^{-1} \\
      \frac{dz^-}{d\beta}\big|_{\beta=0^+} &= -\Paren{ \bar\alpha\nabla\ell^-(z^-) - \alpha\nabla\ell(z^-) }\tr \Paren{ \nabla^2 U(z^+)}^{-1}.
    \end{split}
    \label{eq:implicit_diff}
  \end{align}
  As $\lim_{\beta\to 0^+} z^+=\lim_{\beta\to 0^+} z^-=z^*$, where
  $z^*=\arg\min_z U(z)$, those relations simplify to
  \begin{align}
    \begin{split}
      \frac{dz^+}{d\beta}\big|_{\beta=0^+} &= -\bar\alpha\Paren{ \nabla^2 U(z^*)}^{-1} \nabla\ell^-(z^*) \\
      \frac{dz^-}{d\beta}\big|_{\beta=0^+} &= -\Paren{ \nabla^2 U(z^*)}^{-1} \Paren{ \bar\alpha\nabla\ell^-(z^*) - \alpha\nabla\ell(z^*) }.
    \end{split}
  \end{align}
  Consequently,
  \begin{align}
    \frac{dz^+}{d\beta}\big|_{\beta=0^+} - \frac{dz^-}{d\beta}\big|_{\beta=0^+} = -\alpha \nabla\ell(z^*)\tr \Paren{ \nabla^2 U(z^*)}^{-1}.
  \end{align}
  Note that these relations still hold if $U$ is perturbed by a linear term
  and $\nabla^2 U$ is (locally) constant, i.e.
  \begin{align}
    \begin{split}
      z^+ &= \arg\min\nolimits_z \bar\alpha\beta\ell^-(z) + U(z) + g\tr z \\
      z^- &= \arg\min\nolimits_z \bar\alpha\beta\ell^-(z) - \alpha\beta\ell(z) + U(z) + g\tr z.
    \end{split}
  \end{align}
  and therefore
  \begin{align}
    \begin{split}
      0 &= \bar\alpha\beta\nabla\ell^-(z^+) + \nabla U(z^+) + g \\
      0 &= \bar\alpha\beta\nabla\ell^-(z^-) - \alpha\beta\nabla\ell(z^-) + \nabla U(z^-) + g.
    \end{split}
  \end{align}
  Since $g$ does not depend on $\beta$ or $z$, it will vanish in the implicit
  differentian step and obtain the same relations as
  in~\eqref{eq:implicit_diff}. Continuing with the proof we read
  \begin{align}
    \begin{split}
      \lim_{\beta\to 0^+} \partfrac{}{\theta} \text{AROVR}_{\alpha,\beta}
      &= \lim_{\beta\to 0^+} \partfrac{}{\theta} \tfrac{1}{\beta} \Paren{ U_\theta(z^+;x) - U_\theta(z^-;x) } \\
      &= \lim_{\beta\to 0^+} \tfrac{1}{\beta} \Paren{ \partfrac{}{\theta} U_\theta(z^+;x) - \partfrac{}{\theta} U_\theta(z^-;x) } \\
      &= \tfrac{d}{d\beta} \Paren{ \partfrac{}{\theta} U_\theta(z^+;x) - \partfrac{}{\theta} U_\theta(z^-;x) }\big|_{\beta=0^+} \\
      &= \tfrac{dz^+}{d\beta}\big|_{\beta=0^+} \partfrac{{}^2}{z\partial\theta} U_\theta(z^*;x)
        - \tfrac{dz^-}{d\beta}\big|_{\beta=0^+} \partfrac{{}^2}{z\partial\theta} U_\theta(z^*;x) \\
      &= \Paren{ \tfrac{dz^+}{d\beta}\big|_{\beta=0^+} - \tfrac{dz^-}{d\beta}\big|_{\beta=0^+} } \partfrac{{}^2}{z\partial\theta} U_\theta(z^*;x) \\
      &= -\alpha \nabla\ell(z^*)\tr \Paren{\nabla_z^2 U_\theta(z^*;x)}^{-1} \tfrac{\partial^2}{\partial z\partial\theta} U_\theta(z^*;x).
    \end{split}
    \label{eq:implicit_deriv}
  \end{align}
  On the other hand, $z^*=\arg\min_z U_\theta(z;x)$ satisfies
  $\nabla_z U_\theta(z^*;x)=0$, from which we deduce (via implicit
  differentiation) that
  \begin{align}
    \frac{dz^*}{d\theta} &= -\Paren{\nabla_z^2 U_\theta(z^*;x)}^{-1} \tfrac{\partial^2}{\partial z\partial\theta} U_\theta(z^*;x).
  \end{align}
  Hence, via the chain rule we obtain
  \begin{align}
    \begin{split}
      \tfrac{d}{d\theta} \ell(z^*) &= \nabla\ell(z^*)\tr \tfrac{dz^*}{d\theta} \\
      &= -\nabla\ell(z^*)\tr \Paren{\nabla_z^2 U_\theta(z^*;x)}^{-1} \tfrac{\partial^2}{\partial z\partial\theta} U_\theta(z^*;x).
    \end{split}
  \end{align}
  The claim is obtain by comparing this with~\eqref{eq:implicit_deriv}. We
  need $U_\theta$ to be twice continuous differentiable to ensure that
  $\nabla_z^2$ is symmetric. \qed
\end{proof}

\end{document}